\DeclareMathOperator{\rreal}{r}
\DeclareMathOperator{\synth}{s}
\newcommand{\x}{\mathbf{x}}
\newcommand{\xh}{\mathbf{\hat{x}}}
\newcommand{\z}{\mathbf{z}}
\newcommand{\M}{\mathcal{M}}
\newcommand{\y}{\mathbf{y}}
\newcommand{\real}{\x^1}
\newcommand{\fake}{\x^0}
\newcommand{\para}[1]{\smallskip \noindent\textit{#1.}}
\begin{document}

\title{GeoPointGAN: Synthetic Spatial Data with Local Label Differential Privacy}

\author{Teddy Cunningham}
\authornote{Both authors contributed equally to this research.}
\affiliation{%
  \institution{University of Warwick}
  \city{Coventry}
  \country{UK}
}
\email{teddy.cunningham@warwick.ac.uk}

\author{Konstantin Klemmer}
\authornotemark[1]
\affiliation{%
  \institution{University of Warwick}
  \city{Coventry}
  \country{UK}
}
\email{k.klemmer@warwick.ac.uk}

\author{Hongkai Wen}
\affiliation{%
  \institution{University of Warwick}
    \city{Coventry}
\country{UK}
}
\email{hongkai.wen@warwick.ac.uk}

\author{Hakan Ferhatosmanoglu}
\affiliation{%
  \institution{University of Warwick}
    \city{Coventry}
    \country{UK}
}
\email{hakan.f@warwick.ac.uk}

\renewcommand{\shortauthors}{Cunningham and Klemmer, et al.}

\begin{abstract}
    Synthetic data generation is a fundamental task for many data management and data science applications.
    Spatial data is of particular interest, and its sensitive nature often leads to privacy concerns.
    We introduce GeoPointGAN, a novel GAN-based solution for generating synthetic spatial point datasets with high utility and strong individual level privacy guarantees.  
    GeoPointGAN's architecture includes a novel point transformation generator that learns to project randomly generated point co-ordinates into meaningful synthetic co-ordinates that capture both microscopic (e.g., junctions, squares) and macroscopic (e.g., parks, lakes) geographic features.
    We provide our privacy guarantees through label local differential privacy, which is more practical than traditional local differential privacy. 
    We seamlessly integrate this level of privacy into GeoPointGAN by augmenting the discriminator to the point level and implementing a randomized response-based mechanism that flips the labels associated with the `real' and `fake' points used in training.
    Extensive experiments show that GeoPointGAN significantly outperforms recent solutions, improving by up to 10 times compared to the most competitive baseline.
    We also evaluate GeoPointGAN using range, hotspot, and facility location queries, which confirm the practical effectiveness of GeoPointGAN for privacy-preserving querying.
    The results illustrate that a strong level of privacy is achieved with little-to-no adverse utility cost, which we explain through the generalization and regularization effects that are realized by flipping the labels of the data during training. 
\end{abstract}

\keywords{Synthetic Data Generation, GAN, Spatial Point Pattern, Local Differential Privacy}

\maketitle

\section{Introduction}
\label{s:intro}

Generating synthetic datasets of high utility is a fundamental challenge in many data management tasks, such as private data sharing~\cite[e.g.,][]{Machanavajjhala2008, Zhang2017, Ge2021, Cunningham2021a, Cunningham2021}, 
benchmark generation~\cite[e.g.,][]{Gu2015, Ghazal2013}, and database management system testing~\cite{Bruno2005, Binnig2007, Torlak2012}.

A key priority when generating synthetic data is preserving the privacy of sensitive information while maintaining high utility.
Differential privacy (DP) and its local variant, LDP, have become the de facto privacy standards for synthetic data generation owing to their mathematically rigorous privacy guarantees, and several studies have tackled the issue of generating private synthetic datasets~\cite[e.g.,][]{Zhang2021, Huang2019, Cunningham2021}.
Whereas the centralized setting of DP relies on a trusted aggregator, LDP offers a stronger degree of privacy by allowing users to perturb their data \emph{before} sharing it with the aggregator.  

In this paper, we develop a locally private machine learning-based solution for generating synthetic data from real data, with the aim to preserve the characteristics of the original real data faithfully.
We focus on spatial point data as the proliferation of mobile technologies and location-based services has made (private) spatial data increasingly valuable to data scientists, companies, and researchers.
Spatial datasets are also used in several societally important fields, such as ecology~\cite{Velazquez2016}, geology~\cite{Zuo2009}, and epidemiology~\cite{Gatrell1996}, many of which need to contend with the need to preserve the privacy of the data subjects (e.g., animals from being poached illegally, individuals being identified from contact tracing apps).

However, when dealing with spatial data, the traditional form of LDP can be unnecessarily restrictive in terms of how it deals with sensitive data, as it normally adopts an ``all-or-nothing'' approach in which all data needs to be perturbed~\cite{Malek2021}.
This affects the utility of the synthetic data for common location analytics tasks.
Label privacy~\cite{Chaudhuri2011} provides a more practical means for achieving the necessary privacy protection without sacrificing utility.
It is based on the notion that the features of a point are public (and so do not need to be perturbed), whereas the label associated with the data is private (and so does need to be perturbed).
Applied to the setting of spatial data, label privacy leads to the following idea: as all location information is public knowledge, it is only a person's \textit{association} with a particular location point at a particular time that is private and in need of perturbation.
Consequently, we combine the idea of label privacy with LDP and utilize label-LDP to provide sufficient privacy protections when generating synthetic spatial data.

In theory, GANs offer a general purpose solution for the data generation problem due to their objective of learning the optimal functional mapping from some random noise input to a faithful representation of real data~\cite{Goodfellow2014}.  
However, most existing GAN architectures for point data stem from computer vision and are designed for capturing simplified continuous shapes and meshes. 
As such, they are not optimized to handle complex spatial patterns observed in the real world, as our experiments show. 
Furthermore, they do not support local privacy and are unsuitable for generating large-scale spatial datasets.
Our solution, GeoPointGAN, addresses these limitations through several key technical novelties in learning the multi-scale partitioned structures of spatial point data. 
GeoPointGAN's architecture uses extended PointNets in both the generator and discriminator and, rather than sampling from a lower-dimensional latent vector, the generator ingests randomly generated pseudo-co-ordinates and learns a transformation to generate realistic outputs.
The discriminator provides outputs on the point level, rather than the batch level, which allows for highly localized training and the seamless integration of local privacy mechanisms. 

Given that they operate with data with `real' and `fake' labels, GANs present an intuitive setting for implementing label-LDP.
GeoPointGAN incorporates label-LDP through a randomized response mechanism that flips the labels provided to the discriminator, thereby providing plausible deniability to each individual's \textit{association} with a location.
We also outline how, as labels are only flipped once, label-LDP does not suffer from the same vulnerability of traditional LDP in which locations can be revealed through repeated querying.
Beyond its privatization properties, label flipping also has potential generalization and regularization effects on the model performance, which we contextualize with related literature.  
In most settings, local privacy mechanisms are known to introduce a lot of noise and typically a large database is needed to generate useful output.
But, as we demonstrate in our experiments, incorporating label-LDP into our training process has negligible effects on model performance, which indicates that GeoPointGAN is effective at minimizing the impacts of the added noise.

We evaluate GeoPointGAN against three recent GAN-based approaches using four real-world datasets, each of which exhibit different characteristics.
The first set of experiments show that GeoPointGAN significantly outperforms existing GAN-based methods in generating spatial point data, improving by up to 10 times compared to the most competitive baseline.
In the second set of experiments, we use the synthetic data generated by GeoPointGAN to answer location analytics queries, namely range, hotspot, and facility location queries.
GeoPointGAN performs excellently in most settings and the synthetic data obtains very similar answers to the queries as the real data does.
Interestingly, in some settings, privatized GeoPointGANs perform slightly better than non-private GeoPointGANs, which supports our hypothesis that label flipping can help to realize generalization and regularization benefits.

Our strong results confirm that GeoPointGAN is a robust method for generating practical private synthetic data that can be a worthy substitute for non-private real data for many data science tasks, and it can also be used to accurately answer other queries, such as proximity queries and clustering-based analysis.
Finally, generating datasets with privacy guarantees motivates further downstream applications.
For example, by harnessing the ubiquity of mobile devices, open data portals can be populated with the private mobility patterns of millions of individuals, with little overhead cost due to the distributed nature of the data collection.
\section{Related Work}
\label{s:related-work}

\para{GANs}
GANs have been utilized for a range of data types, including image data~\cite{Goodfellow2014}, audio streams~\cite{Akbari2018}, text data~\cite{Chen2018}, traffic patterns~\cite{Zhang2020}, and gene expressions~\cite{Dizaji2018}. 
In the geospatial domain, GANs have been used for generating digital elevation maps~\cite{Klemmer2021a} and global surface temperatures~\cite{Klemmer2021d}.
Existing GAN approaches for continuous spatial point co-ordinates mostly deal with point clouds that are simplified, continuous representations of shapes and surfaces. 
The first GAN tailored to point clouds (r-GAN)~\cite{Achlioptas2018} builds on advances in processing point clouds in neural networks, most notably PointNet~\cite{Qi2017}. 
Generating faithful shapes based on point cloud datasets, such as ShapeNet~\cite{Chang2015}, is an active research challenge~\cite[e.g.,][]{Li2018,Shu2019,Gal2020}.
Further studies have utilized GANs for point cloud upsampling~\cite{Li2019a}, shape completion~\cite{Sarmad2019}, or against adversarial attacks~\cite{Zhou2020}. 
Their applications to real-world data have been limited thus far, with a few recent exceptions such as an application to Lidar data~\cite{Caccia2019}.

Geospatial datasets have different characteristics, which means their point patterns are very different from the point clouds that describe shapes and meshes. 
They do not have a mesh-like structure, they are typically more complex and noisy, and may be governed by underlying dynamics such as self-excitement. 
Few studies have tackled this class of data using GANs:~\citet{Xiao2017} use Wasserstein GANs to learn temporal (one-dimensional) point processes, while~\citet{Klemmer2019a} learn conditional GANs contextualized by the co-ordinates of continuous spatial point data. 
However, these works provide no intuition for point transformations or for producing new spatial point patterns similar to the input. 
The challenging nature of generating spatial point patterns and the lack of existing work addressing this problem help to motivate our work.

\para{Private GANs}
In recent years, there have been a number of studies proposing differentially private GANs, including DPGAN~\cite{Xie2018}, DP-CGAN~\cite{Torkzadehmahani2019}, PATE-GAN~\cite{Yoon2018}, and the work of~\citet{Frigerio2019}, which extends DPGAN to continuous, discrete, and time series data. 
Existing private GANs have focused on other specific domains, such as medical~\cite{Yoon2018, Beaulieu-Jones2019, Torfi2020}, image~\cite{Torkzadehmahani2019}, or time series data~\cite{Wang2020}, as opposed to spatial data.
Furthermore, all existing private GANs use centralized DP, normally by clipping and adding noise to the gradient during training~\cite[][]{Xie2018, Frigerio2019} or applying existing private frameworks~\cite[e.g.,][]{Yoon2018, Augenstein2019}.
This different privacy setting means we cannot compare them to our work.

\para{Location Privacy}
Both DP and LDP have increasingly been applied with location-specific variants, such as geoindistinguishability~\cite{Andres2013}.
Most work has focused on data publication with DP~\cite[e.g.,][]{Mohammed2011, Chen2012, Acs2014, Cormode2012, Xiao2015, Ghane2018, Gursoy2018}, as opposed to data synthesis and LDP.  
Some other work exists on private trajectory synthesis and publication ~\cite[e.g.,][]{He2015, Gursoy2020, Qu2020}, but recently proposed solutions, in both the centralized \cite[e.g.,][]{He2015, Gursoy2020} and local settings \cite[]{Cunningham2021a}, all possess a common limitation.
They all produce outputs that correspond to arbitrary grid cells or places of interest, whereas we generate co-ordinate data (i.e., the same form as the input data).  
While one could extend these solutions to generate individual points (e.g., by using uniform sampling), \citet{Cunningham2021} show that adapting existing solutions in this way fails to produce high-quality synthetic spatial point data.
Their purpose-built solution is designed for the centralized DP setting, which leaves generating high-quality spatial point data in local privacy settings as an important, yet unaddressed, challenge.
We note that there has been some work on location data in local settings (i.e., LDP and variants thereof). 
\citet{Chen2016} use personalized LDP for spatial data aggregation, \citet{Xiong2019} focus on continuous location sharing using randomized response, and \citet{Cunningham2021a} publish LDP-compliant sequences of places of interest.
However, extensions of these works to our setting are unviable owing to their fundamentally different problem and/or privacy settings.

\para{Label DP}
Label-DP was formally introduced by~\citet{Chaudhuri2011} and has since been the focus of several studies~\cite{Wang2019b, Ghazi2021, Esfandiari2021, Yuan2021, Malek2021}.
All of these works are based on the same premise as our work: only the labels attached to data are sensitive, with the data itself being non-sensitive.
Despite this work, almost all prior work has been in the centralized setting; only~\citet{BusaFekete2021} consider the local setting.
Hence, we are the the first to apply label-LDP to GANs to generate privacy-preserving spatial point data with LDP-style guarantees.

\section{Method}
\label{s:method}

\begin{figure*}[t]
    \centering
    \includegraphics[width = 0.93\textwidth]{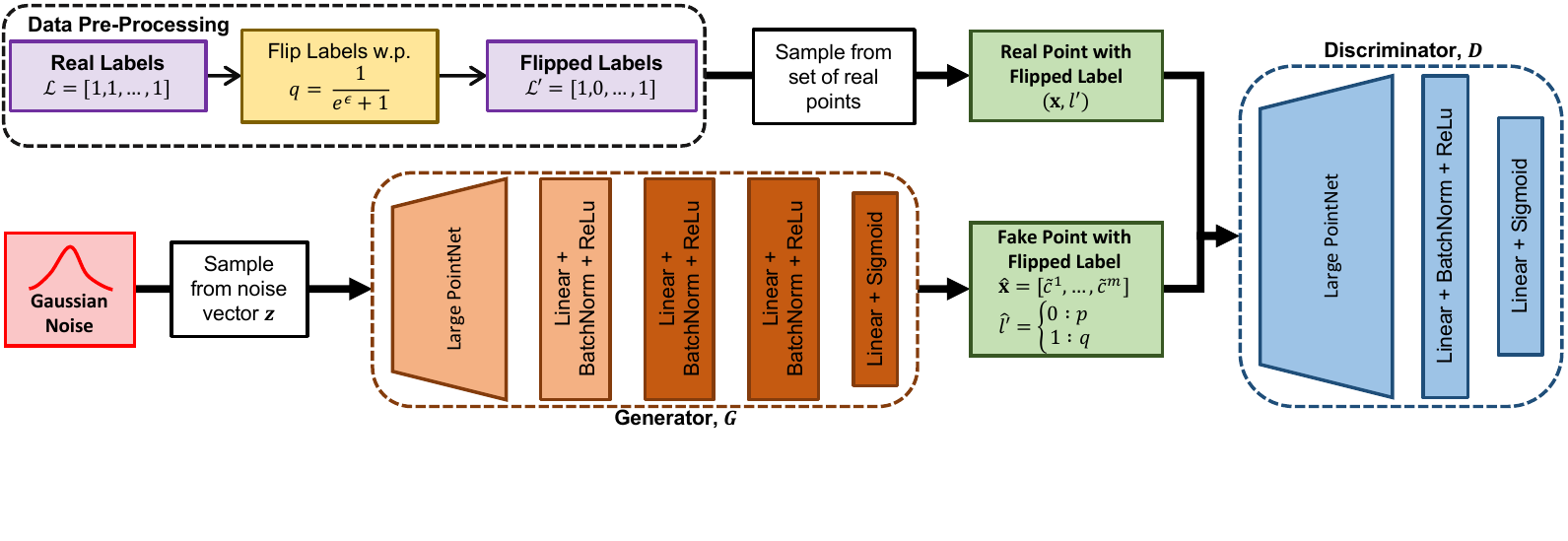}
    \vspace{-1.5cm}
    \caption{GeoPointGAN modeling pipeline including the label-LDP mechanism} 
    \label{fig:architecture}
    \vspace{-.2cm}
\end{figure*}

Before introducing GeoPointGAN, we first present some background on GANs, and discuss the privacy setting we follow in our work.

\subsection{GANs}
\label{sss:gans}
GANs are a family of models that seek to learn the data generating process of observed data $\x \sim p_{data}(\x)$. 
Learning is facilitated through two networks: a generator $G$, and a discriminator $D$.  
The generator $G(\z,\Theta_{G})$ with parameters $\Theta_{G}$ maps a random noise input $\mathbf{z} \sim p_{\z}(\z)$ to the feature space of the real data $\x$ so that $G:\z \xrightarrow{} \x$. 
The discriminator $D([\x,\xh], \Theta_{D})$, parameterized by $\Theta_{D}$, then attempts to distinguish real data $\x$ from synthetic samples $\xh$, so that $D: [\x,\xh] \xrightarrow{} \{0,1\}$, with 0 and 1 denoting labels of `fake' and `real' data points respectively. 
The learning process follows from a min-max game between $G$ and $D$, which is given as $V(D,G)$:
\begin{equation}
\begin{split}
\min_G \max_D V(D,G) = & \mathbb{E}_{\x \sim p_{data}(\x)} \bigl[\log D(\x)\bigr] + \\
& \mathbb{E}_{\z \sim p_{\z}(\z)} \bigl[\log (1 - D(G(\z)))\bigr]
\end{split}
\end{equation}

In our case, the input feature vector $\x \in \mathbb{R}^m$ represents the spatial co-ordinates of a point in $m$-dimensional space. 

\subsection{Privacy Setting}
\label{ss:privacy-setting}

\subsubsection{Local Differential Privacy}
\label{sss:ldp}
The centralized model of DP~\cite{Dwork2006} offers strong privacy guarantees through a level of plausible deniability but assumes that the data aggregator can be trusted, which may not always be appropriate, especially when highly sensitive data (e.g., location data) is concerned.  
Therefore, we focus on the local setting as it achieves a stronger level of privacy by allowing users to perturb the data before sharing it with the data aggregator.

\begin{definition}[$\epsilon$-local differential privacy~\cite{Duchi2013}]
A randomized mechanism $\M$ satisfies $\epsilon$-local differential privacy if, for any two inputs $\x_i, \x_j$ and output $\y$:
\begin{equation}
\textstyle
    \Pr[\M(\x_i) = \y] \leq e^\epsilon \times \Pr[\M(\x_j) = \y]
    \label{eq:ldp}
\end{equation}
\end{definition}
\noindent where $\epsilon$ is the privacy budget that controls the level of privacy protection.
The intuition with LDP is that, given the output $y$, an adversary cannot (with high confidence) identify the input value.

One important property of LDP is post-processing.
Formally, for any two mechanisms $\M_1$ and $\M_2$, if $\M_1$ satisfies $\epsilon$-LDP, then the composition $\M_2(\M_1(\cdot))$ satisfies $\epsilon$-LDP (regardless of whether $\M_2$ satisfies $\epsilon$-LDP itself) \cite{DelRey2020}.
In a practical sense, the post-processing property allows the output of an LDP mechanism to be used and manipulated infinitely, without affecting its privacy guarantee, as long as there is no further interaction with the private data.

LDP can be achieved by using randomized response~\cite{Warner1965} in which a user reports their true data with probability $p$, and reports an alternative response with probability $q$.
LDP attaches strong privacy guarantees to randomized response outputs. 
To satisfy LDP in the generalized randomized response setting, the probability a user reports true information is $p = \frac{e^\epsilon}{e^\epsilon + d - 1}$, where $d$ is the size of the output set.  
Hence, the probability of reporting any other single output is $q = \frac{1}{e^\epsilon + d - 1}$. 

\subsubsection{Label Local Differential Privacy}
\label{sss:label-ldp}
The notion of LDP is extended to label-LDP if we consider each feature vector $\x_i$ to have a label $l_i \in \mathcal{L}$, together denoted as $(\x_i, l_i)$.
\begin{definition}[$\epsilon$-label-LDP]
A randomized mechanism $\M$ satisfies $\epsilon$-label-LDP if, for any labeled feature vector $(\x, l)$ with the input labels $l_i, l_j \in \mathcal{L}$ and output label $l_k \in \mathcal{L}$:
\begin{equation}
\textstyle
    \Pr[\M((\x, l_i)) = (\x, l_k)] \leq e^\epsilon \times \Pr[\M((\x, l_j)) = (\x, l_k)]
    \label{eq:label-ldp}
\end{equation}
\end{definition}
It is intuitive to observe that label-LDP possesses the same post-processing property as traditional LDP, and that randomized response can be used to ensure label-LDP.

As explained in Section \ref{s:intro}, label-LDP provides more practical, yet sufficiently private, protection to data by only perturbing the label attached to a feature vector.
This is appropriate for our problem given that we deem information about locations in a region to be sufficiently public with only one's \textit{association} with a location being private information.
From Definition 3.2, the intuition is that an adversary cannot (with high confidence) identify whether the person was at the reported location or not.

We assume that the dataset of real points covers a sufficiently large proportion of the domain in which fake points can be generated.
This is to prevent the labels being sufficiently correlated with features, which would undermine privacy~\cite{BusaFekete2021}.
For example, if the ratio between land area and total area was too small, many fake points could be generated in nonsensical locations (e.g., oceans), which would allow an adversary to identify fake points and, by extension, real points.
While this assumption does not affect the mechanism in any way, we impose this soft constraint as an extra layer of protection against privacy leakage.

Finally, although one could na\"{i}vely apply randomized response directly to the real data, this limits the dataset size to approximately $\tfrac{Ne^\epsilon}{e^\epsilon + 1}$, which heavily limits the range of analytics tasks for which the private data can be used.
Hence, a more flexible solution that can generate datasets of any size is necessary.

\subsubsection{Examples}
\label{sss:example}
To further illustrate the use of label-LDP in our setting and its advantage over traditional LDP, consider the following two example scenarios.
In both examples, although having the original locations with perturbed labels is useful in itself, the samples may not be representative and, in any case, they will be noisy, owing to perturbation.
As such, being able to generate datasets of any size based on the original distribution is important, and gives end users more flexibility.

For our first example, consider that a city's government wants to know the distribution of its residents' locations at 10am.
This might help to determine the approximate proportion of people working from home, which is helpful for managing working conditions or reducing disease spread.
Given that the government will know every resident's address (e.g., through voter rolls or council tax information), this information is non-private.
The private element is where each person is at 10am, and residents will want a degree of plausible deniability, which is provided by label-LDP.
In comparison, traditional LDP is unnecessarily restrictive here as it would require the perturbation of the location of each resident at 10am, even if their home address is known.

Moreover, repeated use of traditional LDP (through perturbation of the location) would eventually reveal the true location. 
While continuous data sharing is a common need for many real-life applications, traditional LDP approaches do not address this most practical setting.
Whereas, with label-LDP, as the location itself is deemed to be non-private, each daily report can be considered to be independent, and so repeated querying does not degrade the overall privacy level provided to each user.
This further motivates label-LDP as a more robust privacy setting in many practical cases.

Our second example differs from the first as the government no longer has a plausible location for each individual for which they want to ask a yes-no question.
Imagine a town with 100,000 people, all of which are asked to privately share their location at 8pm, which is also the time at which a large number of residents are attending a concert.
With traditional LDP, the locations of each individual would need to be perturbed, which would induce a large amount of noise into the dataset and greatly affect utility.
However, public knowledge (e.g., news sources) would indicate that many people are at the concert and so we should expect a large number of reported locations to be associated with the concert.
As such, label-LDP still gives each concert-goer a degree of plausible deniability regarding their presence at the concert, while preserving the overall popularity distribution, which is publicly known (to some extent).

\subsection{Model Architecture}
\label{ss:geopointgan}
Existing GAN architectures for point data, mostly stemming from computer vision research, aim to capture object shape point clouds or mesh-structured data. 
As such, they are designed to model simple shapes and object outlines.  
Spatial point patterns, such as location data from mobile devices, typically have a noisy, multi-scale partitioned structure, and may cover the whole observational area, rather than having clear outlines.
For example, while the point cloud of a chair can roughly be segmented into six elements (four legs, seat, back), spatial point patterns in the real world can consist of hundreds of intricate macroscopic (e.g., terrain, cities) and microscopic (e.g., roads, junctions) elements.
GeoPointGAN includes several novel approaches to address the challenges of generating this data.
Its data-processing pipeline and architecture, outlined in Figure~\ref{fig:architecture}, consist of three main components, which we explain next.

\subsubsection{Generator}
\label{sss:generator}
GeoPointGAN samples a noise vector of the \textit{same} dimensionality as the desired output.
This is equivalent to sampling $B$ random points $\z = [\tilde{c}_{z}^{1},...,\tilde{c}_{z}^{m}]$ with co-ordinates $c$ in the same space $m$ as a real point $\x = [c^{1},...,c^{m}]$. 
This is in contrast to traditional GAN approaches, which sample a Gaussian noise vector $\mathbf{z}$ from a lower-dimensional latent space. 
Rather than learning a model that `upsamples' from a low-dimensional latent space to a higher-dimensional output space, we thus aim to learn a model that transforms data from an $m$-dimensional latent space to a meaningful representation in the same $m$-dimensional space.  

We use this noise sampling strategy to design a novel PointNet-based generator. 
PointNet~\cite{Qi2017} was originally devised for classification and segmentation of raw point clouds. 
While it has been used as the basis for GAN discriminators before 
\cite[see][]{Achlioptas2018}, we propose the first GAN that utilizes PointNet in the generator. 
Particularly, its ability in providing transformation invariant properties for unordered data is desirable for spatial point generation. 
This is achieved by running the input through symmetric functions (e.g., max pooling operators) to compute global point set features. 
This step is followed by a segmentation network that combines the global information (e.g., city boundaries, rivers) with local, point-wise information (e.g., roads, junctions) to learn a combined representation. 
Lastly, a spatial transformer network (STN)~\cite{Jaderberg2015} aligns the learned global and local point set features with the output space.
We find that the traditional STN architecture is unable to resolve the complexities of spatial point datasets sufficiently, owing to the shallowness of the neural networks deployed. 
In particular, while macroscopic structures (e.g., coastlines) can be captured reliably, the STN is incapable at learning small-scale patterns (e.g., minor roads).
Consequently, we extend the STN such that we have five one-dimensional convolutional layers, with four fully connected layers, adding batch normalization between every layer and the ReLu function (except the last layer). 
Altogether, we refer to this altered PointNet as `Large PointNet'.

As a final step, our generator takes the transformation invariant, aligned features of Large PointNet and projects them into the $m$-dimensional output space using four fully connected layers. 
This is similar to the prediction head for point segmentation, only that we produce $m$ features per point (our synthetic co-ordinates), rather than one. 
These design choices are informed by extensive testing and are validated in our experiments.

\subsubsection{Discriminator}
\label{sss:discriminator}
Our discriminator architecture is inspired by~\citet{Achlioptas2018}, but comes with some fundamental technical improvements and a critical change that lets us incorporate our label-LDP mechanism.
First, we run points through our `Large PointNet' module. 
This balances the capacity for learning point set representations between the generator and discriminator, allowing for an evenly matched min-max game. 
The discriminator's prediction head consists of two fully connected layers and a sigmoid activation. 
Second, we alter the last fully connected layer to produce predictions not on the batch, but rather the point level.
That is, the discriminator's task is thus to determine whether each \textit{individual} point is real or fake, as opposed to each batch of points.
Constructing the discriminator in this way allows us to seamlessly incorporate a localized privacy mechanism into model training.

\subsubsection{Privacy Mechanism}
\label{sss:privacy-mechanism}
We integrate point-level privacy guarantees into GeoPointGAN by probabilistically flipping the labels of the real and fake points (using randomized response) before showing the data samples to the discriminator.
Whereas LDP would require perturbation of each co-ordinate point (using, say, the Laplace or exponential mechanism), with label-LDP, perturbing the label is sufficient.
In our setting, we have two (pseudo-)labels -- `real' and `fake' -- which means that $d = 2$ and that flipping can be conducted using biased coin tosses in which the probability that a label's true status is maintained is $p$, and the probability that a label's status is flipped is $q = 1-p$.
The labels of real points are flipped on users' devices during data pre-processing, which ensures that the central agent (i.e., the GAN networks) never has access to this information.
Should a real point be sampled several times throughout training, it will always have the same (flipped) label. 
From Definition 3.2, the intuition follows that the discriminator cannot determine (with high probability) that a point with a real label is actually real, or whether it is a fake point masquerading as a real one (or vice versa).

In a practical setting, all training is conducted by a central agent (who can be trusted or untrusted) on a remote server.
Individual data is collected through mobile devices and each individual is responsible for flipping the label associated with their location.
Hence, the only data transferred from this device is the location and the flipped label, which means that no central agent can definitively determine the true label with absolute certainty.
Importantly, the discriminator does not know which points are generated by the generator, and which points are transmitted to the server by users; it can only distinguish points based on their perturbed labels.
In summary, the discriminator has no way of (definitively) knowing whether any one point is real with a real label, real with a fake label, fake with a real label, or fake with a fake label.

\subsubsection{Effects on Training and Generalization}
\label{sss:effects}

A label flipping approach such as ours does not necessarily reduce model performance, but can even have beneficial effects. 
In predictive models, randomly flipping labels can act as a regularizer, preventing the model from overfitting and improving generalization~\cite{Xie2016}.
When working with noisy labels, label flipping can incorporate the uncertainty of the labels into the model~\cite{Nguyen2019}. 
There is a vast collection of literature that focuses on GAN regularization and robustness, and addresses problems such as limited data availability or generator-discriminator imbalance.  
Manipulating the (pseudo)-labels of GANs has proven to be a successful strategy to this end.
Specifically, adding noise to the labels or applying one-sided label smoothing have been shown to improve GAN training and these are common best practices~\cite{Salimans2016}.
Finally,~\citet{Jiang2021} provide a study that is closely related to our approach. 
The authors propose to feed the discriminator with fake data masquerading as real data (i.e., fake data with real labels). 
While this is proposed mainly as an augmentation strategy for sparse data environments, it is very similar to our label flipping approach, although they do not feed the discriminator real data with fake labels as we do. 
The authors also provide a theoretical intuition for training convergence and their approach.
Their proof highlights how a GAN trained with label flipping augmentation minimizes the Jensen-Shannon divergence between the (smoothed) real and synthetic data distributions and is, in theory, able to perfectly capture the data generating process.
Hence, we expect that private GeoPointGANs will (to a certain extent) perform as well as non-private GeoPointGANs (i.e., ones with no label flipping).

\subsection{Model Training}
\label{sss:pseudocode}
Algorithm \ref{alg:training} describes the training of GeoPointGAN.
Note that Lines 1-2 are conducted on user devices, although we include the steps here for completeness.
Before training, each real point is assigned the `real' label: $l_i = 1$ (Line~1). 
These labels are then flipped with probability $q$, where $q$ is controlled by the privacy budget, $\epsilon$ (Line~2).
Perturbed labels are denoted as $l'_i$.
We then initiate the training loop (Line~3).
At each training step, $B$ points are sampled from the real data \textit{without} replacement (so as to not oversample points from high-density areas)  (Line~4).  
$B$ random co-ordinates are also drawn from the noise prior $p_{\z}$ (Line~5) and transformed in $G$ to generate $B$ fake points: $\xh = G(\z)$.
The label of each fake point, $\hat{l}_i = 0$, is flipped with probability $q$ to obtain $\hat{l}'_i$ (Line~6).
Real points $(\x, l')$ and fake points $(\xh, \hat{l}')$ are then classified as real or fake by $D$, after which $D$ is updated using the optimizer (Line~7).
We then train $G$ by generating $B$ new fake points (Line~8), flipping their labels with probability $q$ (Line~9), and once more classifying them using $D$. 
$G$ is then updated using its optimizer (Lines 10). 
This concludes one training step. 
$D$ and $G$ continuously play this game, with $G$ getting better and better at generating synthetic data.
After $T$ training steps, the label-LDP generator is published (Line~11).


\begin{algorithm}[t]
\small
\caption{GeoPointGAN training}\label{alg:training}
\algrenewcommand\algorithmicindent{0.75em}%
	\begin{algorithmic}[1]
	    \Require{$\mathcal{D}$, $\epsilon$, $B$, $T$}
	    \State Assign real points real labels: $\{\x_1, ..., \x_n\} \rightarrow \{(\x_1, l_1), ..., (\x_n, l_n)\}$
	    \State Flip real labels with prob. $q = \frac{1}{e^{\epsilon}+1}$ to obtain $\{(\x_1, l'_1), ..., (\x_n, l'_n)\}$
		\For{$1$ \textbf{to} $T$}
		    \State Sample $B$ real points with flipped labels: $\{(\x_1, l'_n), ..., (\x_B, l'_B)\}$
		    \State Sample $B$ random co-ordinates $\{\z_1, ..., \z_B\}$ from noise prior $p_{\z}$
		    \State Flip fake point labels with probability $q$ to obtain $\{\hat{l}'_1, ..., \hat{l}'_n\}$ 
		    \State Update $D$ by ascending its stochastic gradient: 
		    \Statex $\nabla_{\Theta_D} \frac{1}{B} \sum_{i=1}^{B} \Big[\log D(\x_i,l'_{i}) + \log (1 - D(G(\z_i),\hat{l}'_{i})\Big] \nonumber$
		    \State Sample $B$ random co-ordinates $\{\z_1, ..., \z_B\}$ from noise prior $p_{\z}$
		    \State Flip fake point labels with probability $q$ to obtain $\{\hat{l}'_1, ..., \hat{l}'_n\}$ 
		    \State Update $G$ by ascending its stochastic gradient: \Statex $\nabla_{\Theta_G} \frac{1}{B} \sum_{i=1}^{B} \Big[\log (D(G(\z_i),\hat{l}'_{i})\Big] \nonumber$
		\EndFor
		\State \textbf{return} $G$
	\end{algorithmic}
\end{algorithm}

\subsection{Privacy Analysis}
\label{ss:privacy-analysis}
We now discuss some important aspects of the mechanism, from a privacy perspective.
First, we show the proposed label flipping approach satisfies $\epsilon$-label-LDP. 
\begin{theorem}
    GeoPointGAN satisfies $\epsilon$-label-LDP.
\end{theorem}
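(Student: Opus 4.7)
The plan is to establish $\epsilon$-label-LDP in two steps: first, verify that the per-user randomized response on the label (Line~2 of Algorithm~\ref{alg:training}) satisfies the definition in isolation; second, invoke the post-processing property of label-LDP to conclude that the entire downstream GAN training pipeline preserves the guarantee.

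For the first step, I would fix an arbitrary feature vector $\x$ and work directly from Definition~3.2. Since the label space is $\mathcal{L} = \{0,1\}$, the size $d = 2$, so the randomized-response parameters simplify to $p = \tfrac{e^\epsilon}{e^\epsilon+1}$ and $q = \tfrac{1}{e^\epsilon+1}$, matching Line~2 of the algorithm. For any inputs $l_i, l_j \in \mathcal{L}$ and any output label $l_k$, the ratio $\Pr[\M((\x,l_i)) = (\x,l_k)] / \Pr[\M((\x,l_j)) = (\x,l_k)]$ is either $1$ (when $l_i = l_j$), $p/q = e^\epsilon$, or $q/p = e^{-\epsilon}$, all of which are bounded by $e^\epsilon$. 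This is a one-line calculation and is the easy part of the argument.

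For the second step, I would observe that the real labels are perturbed exactly once, on each user's device, before any data leaves that device, and that only the perturbed label $l'_i$ is ever transmitted to the central training agent. All subsequent operations in Algorithm~\ref{alg:training} --- sampling minibatches of real points (Line~4), generating and labelling synthetic points from the public noise prior (Lines~5--6, 8--9), updating the discriminator and generator (Lines~7, 10), and releasing the generator $G$ (Line~11) --- depend on the private bits only through $l'_i$. None of them re-queries the true association between a user and a location. Consequently, the entire training procedure is a (randomized) post-processing of the outputs of the per-user randomized-response mechanism, and by the post-processing property stated in Section~3.2.1 (which Section~3.2.2 notes carries over verbatim to label-LDP), the composition satisfies $\epsilon$-label-LDP.

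The one subtlety I expect to need to emphasize, rather than grind through, is that a given user's flipped label $l'_i$ may be presented to the discriminator in many of the $T$ minibatches. A reader familiar with standard LDP might worry that this triggers sequential composition and degrades $\epsilon$ by a factor of $T$. It does not: the label is flipped \emph{once}, offline, and the same perturbed value is reused deterministically thereafter, so from the perspective of user $i$'s private bit the entire trajectory of gradients and discriminator scores is a single post-processing of $l'_i$, not a sequence of independent queries. This is the same distinction the paper draws in Section~\ref{s:intro} and in the two motivating examples of Section~3.2.3, and invoking it cleanly is what carries the proof across the finish line.
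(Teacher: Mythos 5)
Your proposal is correct and follows essentially the same route as the paper: the core is the same randomized-response ratio calculation $p/q = e^\epsilon$ with $d=2$, and your second step (post-processing, plus the observation that each label is flipped exactly once so repeated sampling does not trigger composition) is precisely the argument the paper makes in the discussion immediately following its proof, merely folded by you into the proof body itself. No gaps.
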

\begin{proof}
For ease of understanding, in this proof, we denote points with real and fake labels as $\real$ and $\fake$, respectively
The probability that a real label tells the discriminator that it is a real label is: $\Pr[\M(\real)\!=\!\real]\!=\!p\!=\!\frac{e^\epsilon}{e^\epsilon 
+ 1}$.  
Hence, the probability that a real label tells the discriminator that it is a fake label is $1\!-\!p$.  
That is, $\Pr[\M(\real)\!=\!\fake]\!=\!q\!=\!\frac{1}{e^\epsilon + 1}$.
Similarly, $\Pr[\M(\fake)\!=\!\fake]\!=\!p$ and $\Pr[\M(\fake)\!=\!\real]\!=\!q$.
From Equation \ref{eq:label-ldp}, we have:
\begin{equation*}
\textstyle
    \frac{\Pr[\M(\real) = \real]}{\Pr[\M(\fake) = \real]} = \frac{p}{q} = \frac{e^\epsilon}{e^\epsilon + 1} / \frac{1}{e^\epsilon + 1} = e^\epsilon
    \tag*{\qedhere}
\end{equation*}
\end{proof}

As discussed in Section \ref{ss:privacy-setting}, label-LDP has the same post-processing properties as LDP, which means that privatized data can be manipulated freely without affecting the privacy guarantee (as long as the true data is not `touched' again).
In our setting, the labels are perturbed by users before they are shown to the discriminator, and the true label is never used again.
This means that the entire training procedure operates under post-processing, and our privacy guarantee remains in tact throughout training.

Many DP mechanisms suffer when points are repeatedly sampled, which causes the privacy leakage to increase each time a point is sampled.
Our mechanism is designed such that these attacks are redundant as point labels are flipped once, and once only, before training begins.
That is, when $\x_i$ is sampled during any training step, it will always have the same perturbed label $l'_i$.
This means that there is no privacy leakage even if a point is sampled more than once during different training steps.
Note that, as we sample \textit{without} replacement, the same point cannot be sampled multiple times during a single training step.

\begin{figure*}[t]
    \centering
    \begin{subfigure}{\textwidth}
        \centering
        \includegraphics[scale=0.5]{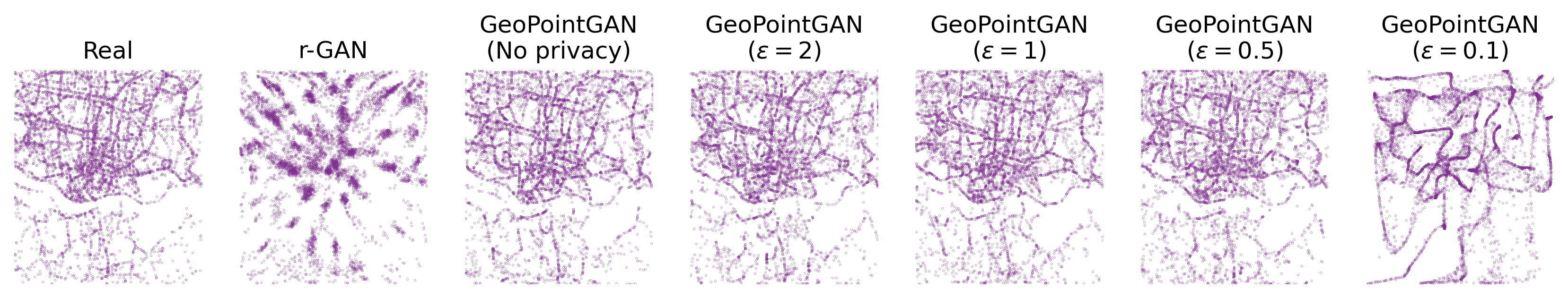}\vspace{-0.5em}
    \end{subfigure}
    \begin{subfigure}{\textwidth}
        \centering
        \includegraphics[scale=0.5]{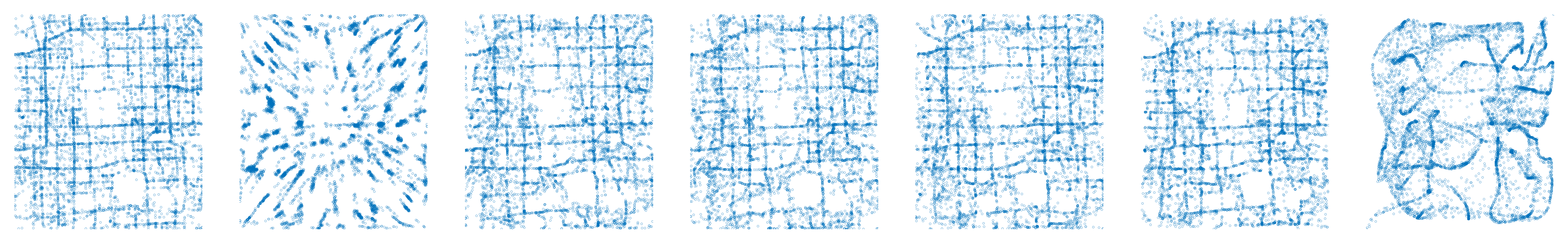}\vspace{-0.5em}
    \end{subfigure}
    \begin{subfigure}{\textwidth}
        \centering
        \includegraphics[scale=0.5]{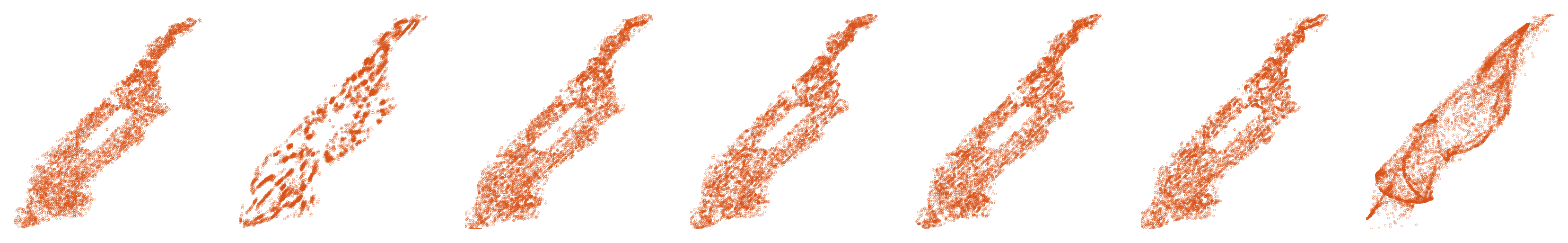}\vspace{-0.5em}
    \end{subfigure}
    \begin{subfigure}{\textwidth}
        \centering
        \includegraphics[scale=0.5]{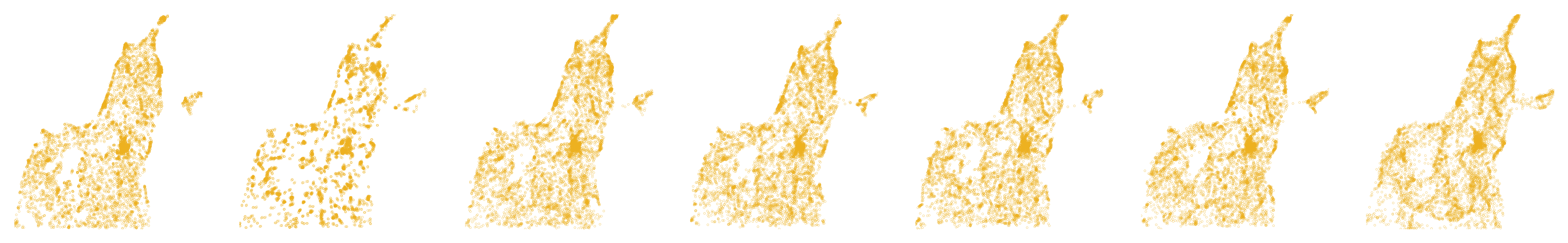}\vspace{-0.5em}
    \end{subfigure}
    \vspace{-.2cm}
    \caption{Sample plots of real and synthetic data;
    from top to bottom: Porto, Beijing, New York City and 3D Road}
    \label{fig:gen}
    \vspace{-.1in}
\end{figure*}

\section{GAN Evaluation}
\label{s:expts}
We evaluate GeoPointGAN in two parts using four real spatial datasets.
The first part, presented in this section, evaluates both non-private and private versions of GeoPointGAN using two fundamental point cloud and GAN metrics. 
We compare GeoPointGAN against three alternative GAN-based approaches. 
In the second part (Section \ref{s:application-queries}), we evaluate GeoPointGAN's practical query-based performance using three common location analytics tasks.

\subsection{Experiment Set-Up}
\label{ss:experiment-settings}

\para{Data}
All experiments use the following open-source, real-world datasets.
They exhibit a range of characteristics (e.g., alignment of the data with the road network, structure of the road network), which allow us to study GeoPointGAN in a variety of contexts.

We use two taxi trajectory datasets from Porto~\cite{Porto2015} and Beijing~\cite{Yuan2010, Yuan2011}.  
We extract the latitude and longitude co-ordinates from the raw data and, remove all points that fall within nonsensical geographic regions (e.g., bodies of water).  
Although points in the real data are linked, we remove this spatio-temporal linkage and consider each point individually.  
Note that doing this has no effects on privacy, as each point has its own privacy guarantee.  
The final Porto dataset contains 79,360 points over an area of 24.7km\textsuperscript{2}, and the Beijing dataset contains 158,260 points across a region with an area of 104.7km\textsuperscript{2}.

We also use 311 call data from New York City (NYC)~\cite{NYC2020}, filtering the dataset to only include data from Manhattan. 
The dataset has 163,220 points, each of which represents the location provided by the caller.
Unlike the other datasets, the New York data is closely aligned with the road network, which itself is grid-like and ordered.

Our final dataset -- `3D road' -- provides three-dimensional spatial co-ordinates (latitude, longitude, and altitude) of the road network in Jutland, Denmark \cite{Kaul2013}. 
The dataset comprises over 430,000 points, covering an area of 185 $\times$ 135 km\textsuperscript{2}. 

\para{Training Setting}
We follow a standardized training process.
At each training step, 7,500 points are randomly sampled from the real dataset. 
We use the Adam optimizer with decoupled weight decay~\cite{Loshchilov2019} and an initial learning rate of \num{4e-5}. 
The learning rate is decreased by a factor of 10 after 5,000, 50,000, and 90,000 training steps. 
We train 1,000 steps per epoch for a total of 100 epochs.
All training is conducted on a single RTX 2080 GPU. 
With this set-up, model training times do not exceed two hours.
Overall, GeoPointGAN, like r-GAN, experiences training that is reliable and consistent. 
At no point during any of our training runs do we experience mode collapse or exploding gradients.

\para{Benchmarks}
We compare GeoPointGAN against three state-of-the-art GANs.
The first -- r-GAN~\cite{Achlioptas2018} -- is the method that is most closely related to GeoPointGAN and it is designed to operate on raw point clouds (point co-ordinates). 
The other two baselines, tree-GAN~\cite{Shu2019} and PCGAN~\cite{Arshad2020}, are designed for graph-structured point clouds (e.g., meshes, shapes). 
All baselines are trained according to the configuration outlined by the original authors. 

Despite the development of other private GANs (e.g., DPGAN, PATEGAN), these works all use the centralized DP, which is fundamentally different from our label-LDP setting.
Similarly, although other private methods for synthetic spatial data generation exist, these methods also use different forms of privacy.
For example, \citet{Chen2016} use personalized LDP, and \citet{Cunningham2021} use centralized DP.
As our privacy setting is different from all of these works, any comparison between them is meaningless.

\para{Evaluation Metrics}
To evaluate GeoPointGAN's ability to preserve the underlying distribution of the real data, we use two widely used utility measures:  Chamfer distance (CD) and earth mover's distance (EMD).
As a continuous and pairwise smooth function, Chamfer distance is a well-established measure of point cloud distance.  
For a set of real points ($\mathcal{R}$) and synthetic points ($\mathcal{S}$), CD is defined as: 
\begin{equation}
\textstyle
\label{eq:chamfer}
    CD(\mathcal{R},\mathcal{S}) = \sum_{r \in \mathcal{R}} \min_{s \in \mathcal{S}} \|r-s\|^{2} + \sum_{s \in \mathcal{S}} \min_{r \in \mathcal{R}} \|r-s\|^{2}
\end{equation}
where $\|\cdot\|$ is some distance measure, and $r$ and $s$ are individual real and synthetic points, respectively.
We use normalized Euclidean distance in our evaluation.

EMD -- a common metric for evaluating GANs -- can be viewed as an optimization problem that seeks to transform one probability distribution into another while minimizing the cost of this operation. 
While the computational cost of obtaining the exact distance is too high for it to be used in deep learning algorithms (and hence approximations are used), we use its exact version as an evaluation measure. 
Defining $\phi: \mathcal{R} \rightarrow \mathcal{S}$ as a bijection, EMD is defined as:
\begin{equation}
\textstyle
    EMD(\mathcal{R},\mathcal{S}) = \min_{\phi: \mathcal{R} \rightarrow \mathcal{S}} \sum_{e \in \mathcal{R}} \| r - \phi (r)\|
\end{equation}

\subsection{Results}
\label{ss:results}
\subsubsection{Baseline Comparison}
\label{sss:baseline-comparison}
Figure \ref{fig:gen} depicts plots of the real data, alongside samples from GeoPointGAN and r-GAN.
We do not include visualizations for tree-GAN and PCGAN as they performed poorly.
While r-GAN succeeds at capturing macroscopic structures, such as the outline of Manhattan and Central Park, it lacks the capacity to model more intricate structures within the outline, such as individual streets or junctions.
On the other hand, GeoPointGAN is able to reproduce microscopic structures to a reasonable extent. 

We calculate mean CD and EMD values by taking 60 samples of 7,500 points each from the fully trained generators.
These values are shown in Table \ref{tab:cd-emd}. 
Note that, here, we use a non-private GeoPointGAN (i.e., $\epsilon = \infty$) to ensure a fair comparison with the baselines, which have no privacy mechanism.
The tree-based methods perform particularly poorly as they fail to learn the spatial data distribution.
This justifies the decision to use generative models that are capable at operating on raw point co-ordinates, rather than shapes or meshes.
GeoPointGAN offers substantial improvements of up to 70\% over r-GAN, with consistently better CD and EMD values, which reflects its ability to preserve microscopic features and generate accurate spatial point data.

\begin{table}[t]
\centering
\small
\addtolength{\tabcolsep}{-2.5pt}
\caption{Mean CD and EMD values}
\vspace{-.1in}
\label{tab:cd-emd}
    \begin{tabular}{c|cccc|cccc}
    \toprule
    \multirow{2}{*}{\textbf{Method}} & \multicolumn{4}{c|}{\textbf{Chamfer Distance}} & \multicolumn{4}{c}{\textbf{Earth Mover's Distance}} \\
    & \textbf{NYC} & \textbf{Por.} & \textbf{Beij.} & \textbf{3DR} & \textbf{NYC} & \textbf{Por.} & \textbf{Beij.} & \textbf{3DR} \\
    \midrule
    tree-GAN \cite{Shu2019} & 0.649 & 0.437 & 0.651 & 1.247 & 0.601 & 1.085 & 0.733 & 1.080 \\
    PCGAN \cite{Arshad2020} & 0.160 & 0.348 & 0.092 & 0.796 & 0.305 & 0.831 & 0.283 & 0.896 \\
    r-GAN \cite{Achlioptas2018} & 0.226 & 0.034 & 0.032 & 0.264 & 0.364 & 0.275 & 0.084 & 0.526 \\
    GeoPointGAN  & \textbf{0.014}	& \textbf{0.019} & \textbf{0.021} & \textbf{0.074} & \textbf{0.031} & \textbf{0.027}	& \textbf{0.032}	& \textbf{0.085} \\
    \bottomrule
    \end{tabular}
    \vspace{-.5cm}
\end{table}

\begin{figure*}[t]
    \centering
    \includegraphics[width=0.9\textwidth]{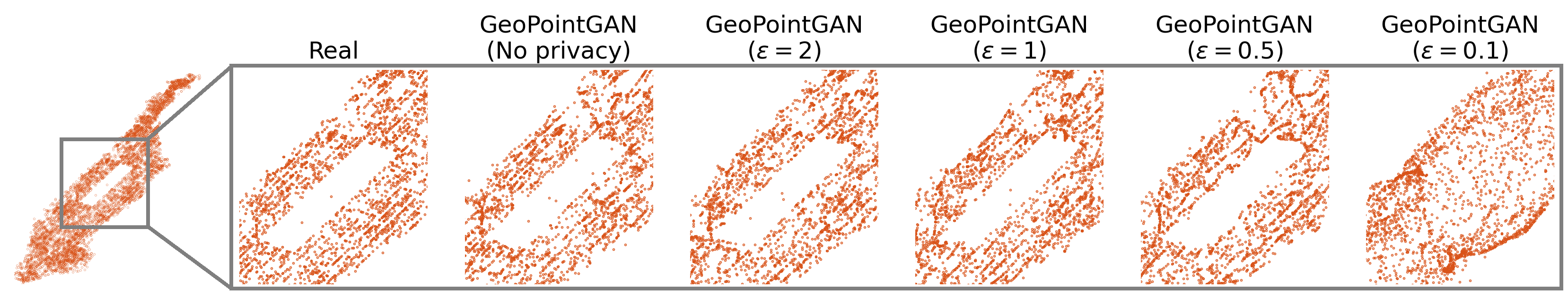}
    \vspace{-.2cm}
    \caption{Real and synthetic data for different privacy budgets; data for New York City, zoomed in on Central Park}
    \label{fig:zoomed-in}
    \vskip -0.1in
\end{figure*}
\begin{figure}[t]
    \centering
    \begin{subfigure}[b]{0.49\columnwidth}
        \centering
        \includegraphics[height = 2.2cm]{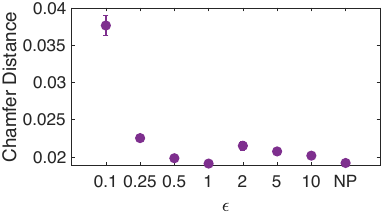}
        \vspace{-0.2cm}
        \caption{Porto}
        \label{fig:porto-cd}
    \end{subfigure}
    \hfill
    \begin{subfigure}[b]{0.49\columnwidth}
        \centering
        \includegraphics[height = 2.2cm]{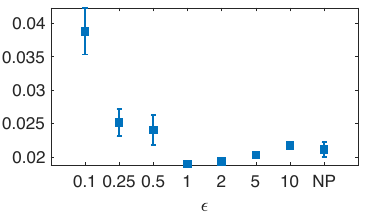}
        \vspace{-0.2cm}
        \caption{Beijing}
        \label{fig:beijing-cd}
    \end{subfigure}
    \\
    \begin{subfigure}[b]{0.49\columnwidth}
        \centering
        \includegraphics[height = 2.2cm]{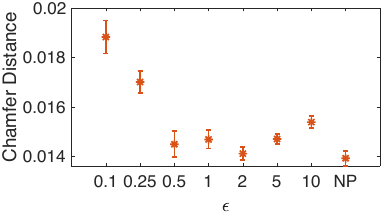}
        \vspace{-0.2cm}
        \caption{New York}
        \label{fig:nyc-cd}
    \end{subfigure}
    \hfill
    \begin{subfigure}[b]{0.49\columnwidth}
        \centering
        \includegraphics[height = 2.2cm]{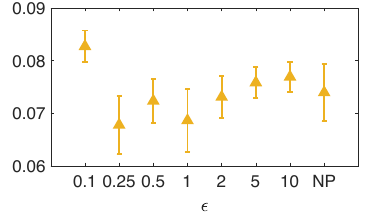}
        \vspace{-0.2cm}
        \caption{3D Road}
        \label{fig:3droad-cd}
    \end{subfigure}
    \vspace{-0.1in}
    \caption{Chamfer distance across different datasets}
    \label{fig:chamfer}
    \vspace{-0.4cm}
\end{figure}

\subsubsection{Effect of Privatization}
\label{sss:privatization}
To assess how the application of our label-LDP mechanism affects GeoPointGAN performance, with particular focus on the sensitivity of GeoPointGAN to the privacy budget, we again draw 60 samples of 7,500 points from the real and synthetic datasets.
We consider seven privacy budgets: $\epsilon = \{0.1, 0.25, 0.5, 1, 2, 5, 10\}$.
As none of the baselines are competitive with GeoPointGAN and they were not designed with privacy-preserving mechanisms in mind, we exclude them from this part of the evaluation.
Figure \ref{fig:gen} shows the effect of privatization on the generated data. 
We observe that lower privacy budgets have a negative impact on visual similarity.
Figure \ref{fig:zoomed-in} further illustrates how the preservation of macroscopic geographic features (e.g., Central Park in New York City) is affected by changing the privacy budget.

Figure \ref{fig:chamfer}, which shows the variation in CD as the privacy budget varies, highlights more interesting behavior.
First, as expected, a very low privacy budget results in poor utility with utility increasing as $\epsilon$ increases. 
However, when $\epsilon > 1$, the CD values start to increase, leading to (albeit subtle) U-shaped curves.
This indicates that our hypothesis that label flipping can aid performance is supported with empirical evidence.
Specifically, we can quantify the optimal degree of label flipping to be when $\epsilon \approx 1$ (i.e., $q \approx 27\%$).
In some cases, better utility is gained using \textit{private} GeoPointGANs (cf. Beijing), which further demonstrates the power of regularization through privatization.

Interestingly, the structure of the underlying data also appears to influence this behavior.
In New York, where the data is more closely aligned with a strict grid structure, the U-shape is more pronounced, indicating that the regularization properties of GeoPointGAN are more influential here.
Conversely, in Porto, where the true data (and road network) lacks a clear structure, the U-shape is more subtle and very high privacy budgets correct the curve downwards.
Finally, for 3D Road, we see large CD values and larger variations in these values, which suggests that the complexity and spatial extent of the dataset pushes GeoPointGAN to its limits.

\section{Data Analytics Tasks}
\label{s:application-queries}
We now evaluate GeoPointGAN using two popular data analytics tasks -- range and hotspot queries -- both of which are popular for location analytics.
We then apply our solution to the data-driven spatio-temporal task of facility location.
We assess the ability for GeoPointGAN to preserve the answers to these queries to demonstrate the applicability of our solution in a database and data science setting.
Given the poor qualitative and quantitative performance of the baselines, and their non-private nature, it is not meaningful to compare GeoPointGAN against them for these queries.
As such, the aim of this evaluation is to compare GeoPointGAN to the optimum (i.e., maximum similarity with the real data).

\begin{figure}[t]
    \centering
    \begin{subfigure}[b]{\columnwidth}
        \centering
        \includegraphics[height=0.5cm]{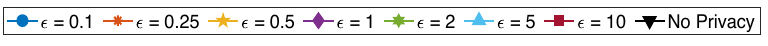}
    \end{subfigure}
    \\
    \begin{subfigure}[b]{0.34\columnwidth}
        \centering
        \includegraphics{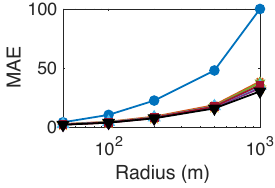}
        \vspace{-0.2cm}
        \caption{MAE -- Porto}
        \label{fig:porto-range}
    \end{subfigure}
    \hfill
    \begin{subfigure}[b]{0.32\columnwidth}
        \centering
        \includegraphics{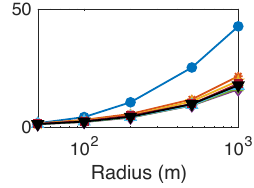}
        \vspace{-0.2cm}
        \caption{MAE -- Beijing}
        \label{fig:beijing-range}
    \end{subfigure}
    \hfill
    \begin{subfigure}[b]{0.32\columnwidth}
        \centering
        \includegraphics{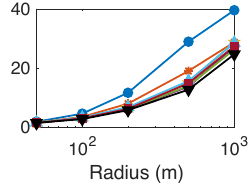}
        \vspace{-0.2cm}
        \caption{MAE -- New York}
        \label{fig:nyc-range}
    \end{subfigure}
    \\
        \begin{subfigure}[b]{0.34\columnwidth}
        \centering
        \includegraphics{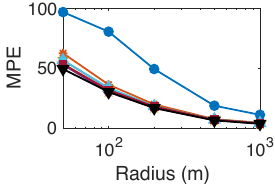}
        \vspace{-0.2cm}
        \caption{MPE -- Porto}
        \label{fig:porto-range}
    \end{subfigure}
    \hfill
    \begin{subfigure}[b]{0.32\columnwidth}
        \centering
        \includegraphics{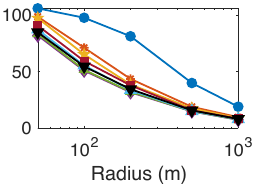}
        \vspace{-0.2cm}
        \caption{MPE -- Beijing}
        \label{fig:beijing-range}
    \end{subfigure}
    \hfill
    \begin{subfigure}[b]{0.32\columnwidth}
        \centering
        \includegraphics{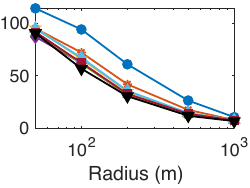}
        \vspace{-0.2cm}
        \caption{MPE -- New York}
        \label{fig:nyc-range}
    \end{subfigure}
    \vspace{-0.7cm}
    \caption{Variation in MAE/MPE as the query radius varies}
    \vspace{-.4cm}
    \label{fig:range}
\end{figure}

\subsection{Range Queries}
\label{ss:range}
Range queries are commonly used in databases, as well as a primitive for location analytics. 
For example, they can be used to quickly assess how many customers are potentially available to a business, or measure accessibility to key services within a certain time, such as schools, hospitals, or vaccination centers.
To assess this, we specify a set, $\mathcal{P}$, of 200 arbitrary places in each city to be the basis of our range queries, and these places are randomly selected from the set of nodes in each city's road network.
The extent of each range query is the circular region defined by the radius, $\rho$, and centered on place $\pi$.
For each synthetic sample, we answer the set of range queries for all $\pi \in \mathcal{P}$ and all values for $\rho$, and we quantify of error using
the mean absolute error (MAE) and mean percentage error (MPE).  
Having conducted range queries for all 60 synthetic data samples, we take the mean of the MAE and MPE values.
We consider the following $\rho$-values: \{50, 100, 200, 500, 1000\} meters.

Figure \ref{fig:range} shows the effect that $\rho$ has on the MAE/MPE of the query answer.
As expected, MAE increases as $\rho$ increases, although MPE decreases; both of these trends are acceptable when considered together.
As a particularly impressive outcome, we see that synthetic data generated with higher privacy budgets (i.e., $\epsilon \geq 0.25$) performs as well, and sometimes better, than the data generated using the non-private version of GeoPointGAN. 
Close inspection indicates that the privatized GeoPointGAN performs best when $\epsilon \approx 1$, which is concordant with the findings in Section \ref{sss:privatization}.

\subsection{Hotspot Analysis}
\label{ss:hotspots}
Hotspot analysis identifies regions with a high number of points and, like
range queries, it is fundamental in location analytics.
For example, businesses need to identify popular regions for advertising, and city agencies have to manage congestion and traffic flow.
We analyze hotspots for the three two-dimensional datasets by generating kernel density estimates (KDEs) for the real and synthetic datasets.
The KDE uses the two-dimensional Gaussian kernel defined over a uniform grid with dimensions of $g \times g$, where $g$ denotes the granularity.  
We use a range of granularities: $g = \{2^6, 2^7, 2^8, 2^9, 2^{10}\}$, and define hotspots to be grid cells in which the density is greater than the 95\textsuperscript{th} percentile.  
For each of the 60 samples, we assess query response similarity between the real and synthetic data using the S\o{}rensen-Dice coefficient (SDC):
\begin{equation}
\textstyle
    SDC = \frac{2\big|\mathcal{H}_{\rreal} \bigcap \mathcal{H}_{\synth}\big|}{\big|\mathcal{H}_{\rreal}\big| +  \big|\mathcal{H}_{\synth}\big|}
    \label{eq:sorensen-dice}
\end{equation}
where $\mathcal{H}$ is the set of hotspots.  

\begin{figure}[t]
\centering
    \begin{subfigure}[b]{\columnwidth}
        \centering
        \includegraphics[height=0.5cm]{Figures Raw/legend2.pdf}
    \end{subfigure}
    \\
    \begin{subfigure}[b]{0.34\columnwidth}
        \centering
        \includegraphics[height=4cm]{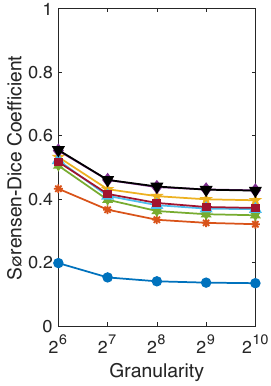}
        \vspace{-0.2cm}
        \caption{Porto}
        \label{fig:porto-hotspot}
    \end{subfigure}
    \hfill
    \begin{subfigure}[b]{0.32\columnwidth}
        \centering
        \includegraphics[height=4cm]{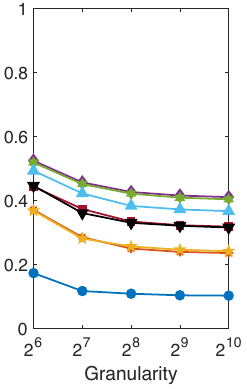}
        \vspace{-0.2cm}
        \caption{Beijing}
        \label{fig:beijing-hotspot}
    \end{subfigure}
   \hfill
    \begin{subfigure}[b]{0.32\columnwidth}
        \centering
        \includegraphics[height=4cm]{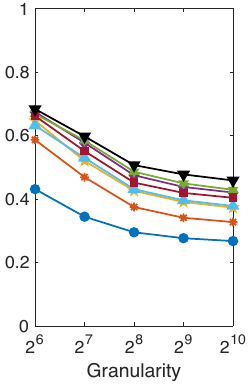}
        \vspace{-0.2cm}
        \caption{New York}
        \label{fig:nyc-hotspot}
    \end{subfigure}
    \vspace{-0.7cm}
    \caption{Variation in SDC as the hotspot granularity varies}
    \label{fig:hotspots}
    \vskip -0.1in
\end{figure}

Figure \ref{fig:hotspots} shows the variation in mean SDC values as the hotspot granularity increases and, once again, we observe similar findings.
Namely, (a) poor performance is observed when $\epsilon = 0.1$, while other $\epsilon$ values are competitive with the non-private GeoPointGAN; (b) private GeoPointGANs sometimes outperform the non-private version; and (c) private GeoPointGANs perform best with a middling $\epsilon$ value, though the exact value depends on the city.

\subsection{Facility Location Queries}
\label{ss:fl}
Facility location is an example (of which there are many) of an end application for our work.
Facility location queries are more complex as they are a combination of range and hotspot queries.
We consider two variants: \textsc{Max-Inf} and \textsc{Min-Dist}.  
In the \textsc{Max-Inf} case, we aim to select the most influential candidate facilities, where influence is commonly defined as the total number of customers that the facilities attract.  
The \textsc{Min-Dist} query instead selects facilities that minimize the total distance between customers and the facilities. 
For both queries, individual location data is needed to accurately model the behaviors of potential customers, which motivates the use of models like GeoPointGAN.

\para{Outline}
Imagine a hot dog salesperson that wishes to locate $k$ outlets throughout Manhattan.  
Since more business could potentially be generated if her outlets were located at the intersections of busy streets, we use the same location set as used for the range queries (as this was a selection of road intersections).
Each $\pi \in \mathcal{P}$ represents a candidate facility, we consider 100 facilities, and assume that there are no existing facilities.  
$\mathcal{F_{\rreal}}$ and $\mathcal{F_{\synth}}$ denote the sets of selected facilities when the real and synthetic data is used, respectively.
To quantify similarity between $\mathcal{F_{\rreal}}$ and $\mathcal{F_{\synth}}$, we use the SDC (Equation \ref{eq:sorensen-dice}), using $\mathcal{F}$ in place of $\mathcal{H}$.
Our evaluation uses $k = \{1, 5, 10, 20, 50, 75\}$.

\para{Results}
Figure \ref{fig:fl} shows the variation in SDC values for the \textsc{Max-Inf} query. 
GeoPointGAN produces synthetic data that answers queries with high accuracy, and the non-private GeoPointGAN performs exceptionally well in most cases, especially in Porto for which it obtains near-optimal results. 
The effect of changing the privacy budget is also noticeable, and we continue to see the phenomenon of privatized versions of GeoPointGAN performing as well as non-private versions.
We conduct the same analysis for the \textsc{Min-Dist} query and similarly strong results are obtained for all values of $\epsilon$ and $k$.  
We omit the corresponding plots due to space limitations.

These strong results demonstrate the practical benefit of our approach and illustrate that facility location can tolerate the noise that is inherent in GAN-based sampling and that is required for label-LDP.
This robustness can also be exploited for other data science tasks, such as nearest neighbor queries and clustering. 

\begin{figure}[t]
\centering
    \begin{subfigure}[b]{\columnwidth}
        \centering
        \includegraphics[height=0.5cm]{Figures Raw/legend2.pdf}
    \end{subfigure}
    \\
    \begin{subfigure}[b]{0.34\columnwidth}
        \centering
        \includegraphics[height=4cm]{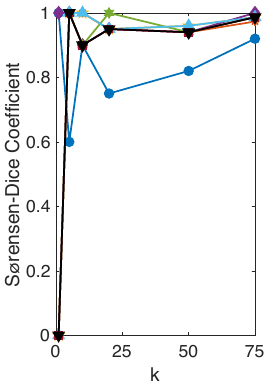}
        \vspace{-0.2cm}
        \caption{Porto}
        \label{fig:porto-fl}
    \end{subfigure}
    \hfill
    \begin{subfigure}[b]{0.32\columnwidth}
        \centering
        \includegraphics[height=4cm]{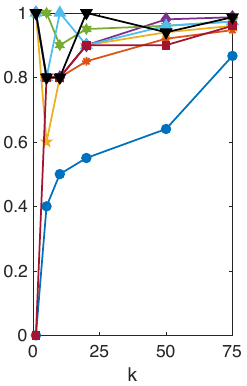}
        \vspace{-0.2cm}
        \caption{Beijing}
        \label{fig:beijing-fl}
    \end{subfigure}
   \hfill
    \begin{subfigure}[b]{0.32\columnwidth}
        \centering
        \includegraphics[height=4cm]{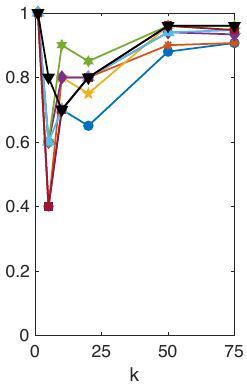}
        \vspace{-0.2cm}
        \caption{New York}
        \label{fig:nyc-fl}
    \end{subfigure}
    \vspace{-0.7cm}
    \caption{Variation in SDC as $\bm{k}$ varies for the \textsc{Max-Inf} query}
    \label{fig:fl}
    \vspace{-0.4cm}
\end{figure}

\section{Final Remarks}
\label{s:conc}
As demonstrated through our experiments, GeoPointGAN is a robust method for generating large synthetic spatial datasets with practical levels of privacy and high utility, both statistically and with respect to several location analytics tasks.
Indeed, in some settings, private GeoPointGANs perform better than non-private versions -- a remarkable and important observation.
This phenomenon is possible due to the design of our privacy mechanism, which exploits the inherent noise in label flipping to harness the regularization effects that can be realized when training GANs.

Beyond these findings, our work provides further insights about the level of privacy provided.
While GeoPointGAN demonstrably fulfills the requirements of label-LDP, the actual level of privacy provided is higher in many practical settings.
As \citet{Malek2021} note, simply removing the sensitive labels of a public dataset and training a model in an unsupervised fashion, complies with label-(L)DP.
We take this further by accounting for situations where, even if we remove the identifier (e.g., taxi ID, 311 caller name), we are still conscious of leaking information based on knowledge regarding the veracity of each point (e.g., if locations can help to identify the caller).
Specifically, we use pseudo-labels (to denote whether points are real or fake) to obfuscate the training data by removing certainty within the model as to what input represents real data.
In this sense, the level of privacy provided is stronger than what is necessarily required with label-LDP, although quantifying this achieved level of privacy is non-trivial.
And, while this level of privacy may not necessarily be strong enough to satisfy LDP in theory, it may satisfy LDP to some extent in practice.
We leave further exploration of these ideas and challenges for future work.

\para{\textbf{Acknowledgments}}
This work is supported in part by the \grantsponsor{EPSRC}{UK Engineering and Physical Sciences Research Council} u under Grant No.~ \grantnum{EPSRC}{EP/L016400/1}.

\balance

\clearpage

\bibliographystyle{ACM-Reference-Format}
\bibliography{97-geopointgan}

\balance

\end{document}